
\typeout{IJCAI--23 Instructions for Authors}


\documentclass{article}
\pdfpagewidth=8.5in
\pdfpageheight=11in

\usepackage{ijcai23}

\usepackage{times}
\usepackage{soul}
\usepackage{url}
\usepackage{bbm}
\usepackage{amssymb}
\usepackage[hidelinks]{hyperref}
\usepackage[utf8]{inputenc}
\usepackage[small]{caption}
\usepackage{graphicx}
\usepackage{amsmath}
\usepackage{amsthm}
\usepackage{amsfonts}
\usepackage{booktabs}
\usepackage{algorithm}
\usepackage{algorithmic}
\usepackage{multirow}

\usepackage[switch]{lineno}
\usepackage{xcolor}


\urlstyle{same}


\newtheorem{defn}{Definition} 
\newtheorem{prop}{Proposition}

\newtheorem{hyp}{Hypothesis} 



\pdfinfo{
/TemplateVersion (IJCAI.2023.0)
}

\title{Goal Alignment: A Human-Aware Account of Value Alignment Problem}


\author{
Malek Mechergui
\and
Sarath Sreedharan
\affiliations
Colorado State University
\emails
\{Malek.Mechergui, Sarath.Sreedharan\}@colostate.edu
}

\begin{document}
\newcommand{\Shortcite}[1] {\citeauthor{#1}~\shortcite{#1}}
\newcommand{\ShortciteManySameLead}[2] {\citeauthor{#1}~\shortcite{#1,#2}}

\newcommand{\Xsays}[2] {\textbf{#1 says:} \emph{#2}}
\renewcommand{\Xsays}[2] {}
\newcommand{\Ssays}[1] {\Xsays{Sarath}{#1}}
\newcommand{\Psays}[1] {\Xsays{Pascal}{#1}}

\newcommand{\qsatt}{\textit{QSAT}_2}
\mathchardef\mhyphen="2D
\newcommand{\thdnf}{3\mhyphen\textrm{DNF}}
\newcommand{\thcnf}{3\mhyphen\textrm{CNF}}

\newcommand{\Action}[2]  {\ensuremath{\mathit{#1}}}
\newcommand{\preP}[1]    {\ensuremath{\Action{pre}{#1}_+}}
\newcommand{\preN}[1]    {\ensuremath{\Action{pre}{#1}_-}}
\newcommand{\add}[1]     {\ensuremath{\Action{add}{#1}}}
\newcommand{\del}[1]     {\ensuremath{\Action{del}{#1}}}
\newcommand{\T}          {\ensuremath{\mathcal{T}}}
\newcommand{\sharedPreP} {\preP{}}
\newcommand{\sharedPreN} {\preN{}}
\newcommand{\sharedAdd}  {\add{}}
\newcommand{\sharedDel}  {\del{}}

\newcommand{\RobotM}              {\ensuremath{\mathcal{M}^R}}
\newcommand{\HumanM}              {\ensuremath{\mathcal{M}^H}}
\newcommand{\ModelParaNegPrec}[2] {#1\textit{-has-neg-prec-}#2}
\newcommand{\ModelParaPosPrec}[2] {#1\textit{-has-pos-prec-}#2}
\newcommand{\ModelParaAdd}[2]     {#1\textit{-has-add-}#2}
\newcommand{\ModelParaDel}[2]     {#1\textit{-has-del-}#2}

\newcommand{\Oracle}
{\ensuremath{\mathcal{O}^{G^*}}}
\newcommand{\HAGL}
{\ensuremath{\mathcal{H}}}
\newcommand{\VQ}
{\ensuremath{\mathcal{V}^Q}}
\newcommand{\note}{\textcolor{red}}
\maketitle
\begin{abstract}
Value alignment problems arise in scenarios where the specified objectives of an AI agent don’t match the true underlying objective of its users. The problem has been widely argued to be one of the central safety problems in AI. Unfortunately, most existing works in value alignment tend to focus on issues that are primarily related to the fact that reward functions are an unintuitive mechanism to specify objectives. However, the complexity of the objective specification mechanism is just one of many reasons why the user may have misspecified their objective. A foundational cause for misalignment that is being overlooked by these works is the inherent asymmetry in human expectations about the agent's behavior and the behavior generated by the agent for the specified objective. To address this lacuna, we propose a novel formulation for the value alignment problem, named {\em goal alignment} that focuses on a few central challenges related to value alignment. In doing so, we bridge the currently disparate research areas of value alignment and human-aware planning. Additionally, we propose a first-of-its-kind interactive algorithm that is capable of using information generated under incorrect beliefs about the agent, to determine the true underlying goal of the user.
\end{abstract}

\section{Introduction}


\begin{figure}[ht]
\includegraphics[scale=0.5]{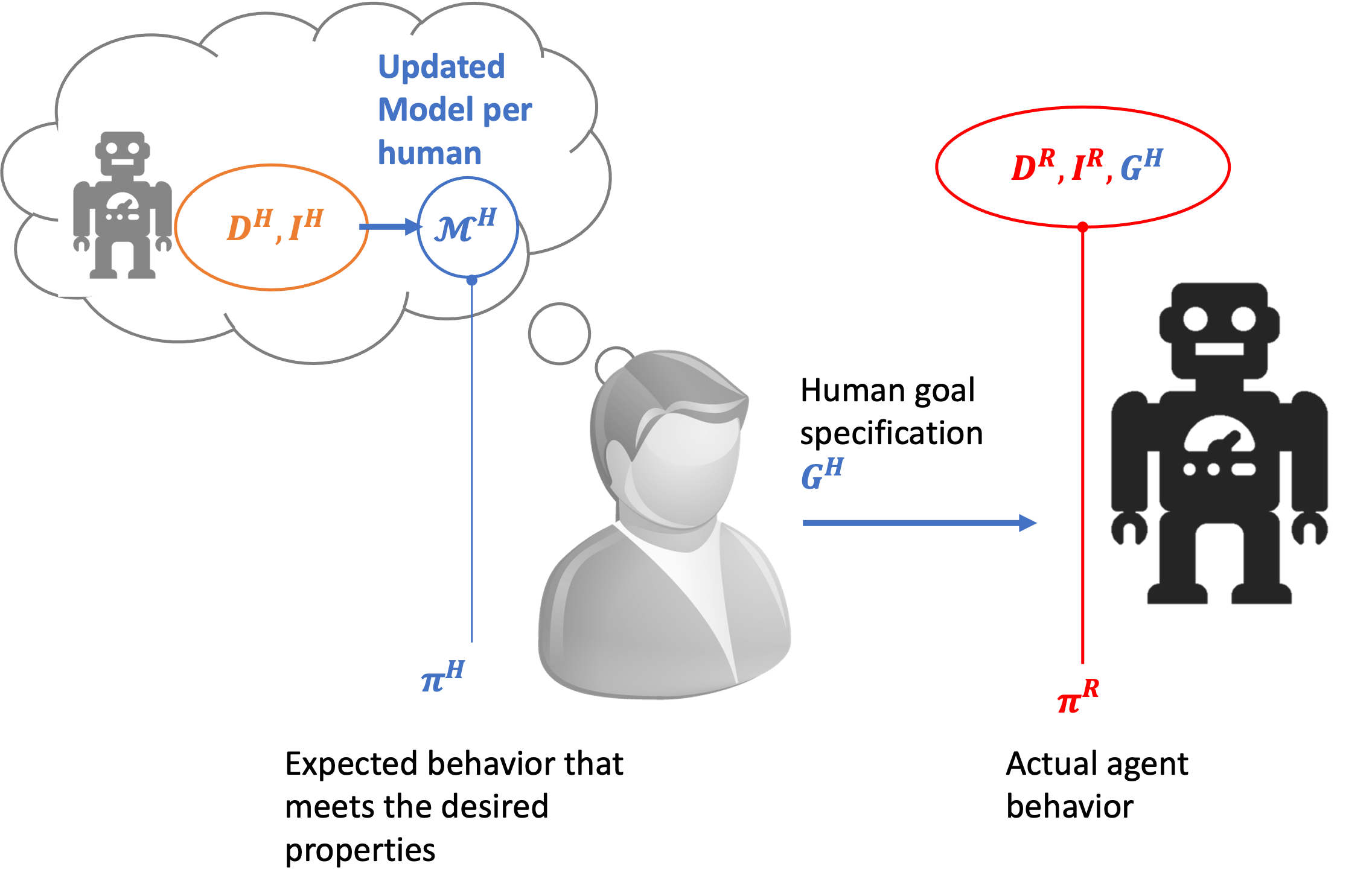} 
\caption{A diagrammatic overview of the objective specification process as contextualized in a
generalized Human-aware AI framework. Humans ascribe a domain model and initial state to the agent which may be different from the true model. Now the human identifies a goal specification whose inclusion in the agent's model they believe will result in plans they would prefer. Note that the human is generating the model updates based on a potentially incorrect understanding of the system's model and using possibly faulty reasoning. The resulting outcomes from pursuing that goal using the robot model could be very different from what the human expected.} 
\label{fig1}
\end{figure}

Value alignment, as presented in \cite{cirl}, is the problem of ensuring that an AI agent's pursuit of its specified objectives will maximize or satisfy the true underlying objective of its human user. Usually studied in the context of scenarios, where such misalignments could have catastrophic consequences, the problem has been widely argued to be one of the most important problems related to AI safety \cite{book1,book2}. While there is a general consensus that the primary cause of the value misalignment problem is the user’s failure to correctly anticipate the outcomes of their specification, current works tend to focus on addressing only some aspects of the problem.
In particular, most works within value alignment tend to focus on decision-theoretic settings, where the objectives are specified as reward functions and try to address problems closely connected to the nature of this representation scheme (cf. \cite{cirl,rewardModel,IRD}).

We argue that, the extant literature on value alignment overlooks the fundamental problem that any information user provides to the system is going to be skewed by 
their beliefs about the agent model, which may be different from the agent's own model.
Which in turn means that the user's expectation about the behavior the agent would exhibit in response to a particular goal specification could be drastically different from what might actually be followed.
Arguably, this asymmetry between the user's expectations about agent behavior and the agent's true behavior is one of the main factors that gives rise to the misalignment in the first place.
As such, for a system to correctly use any information provided by the user it must try to re-interpret it in the light of this inherent difference between the user and the agent. 

Thus in this paper, we will present a new formalization of the value alignment problem that accounts for this asymmetry between the user and the AI agent. We will do so by first removing many of the extraneous parts of the problem that are artifacts of the setting rather than the true nature of the value misalignment problem. In fact, we will focus on one of the most basic sequential decision-making setting, namely deterministic goal-directed planning. This setting will transform the value alignment problem to a {\em goal alignment problem}, which will be specifically grounded in a scenario where the user's belief could be different from the agent model.

To achieve this, we will build on and generalize a framework called Human-Aware AI \cite{book}, that was originally introduced to generate explainable behavior. The framework uses psychological concepts of mental models \cite{tom}, to model and understand human-AI interaction. Figure \ref{fig1}, shows how we could build on the human-aware AI framework to understand how goal misspecification may arise. As clearly illustrated, the human is specifying a goal to an agent to elicit a behavior they would deem desirable. However, if their beliefs about the agent model are different from the true agent model or if their reasoning process is faulty, it could lead to the human providing goals that may result in completely unexpected behaviors. This also means that if the agent hopes to identify and try to satisfy the true objectives of the user, it must identify the existing differences between the user's beliefs and the agent model and use this difference to reason about the intended behavior.

In summary, the primary contributions of this paper are as follows:
\begin{itemize}
    \item We formalize and define the problem of {\em Human-aware goal alignment}; a formulation of the value alignment problem that explicitly accounts for the asymmetry between the user's expectations and the agent's decisions. 
    \item We establish the lower bound complexity of the human-aware goal alignment problem.
    \item We introduce, a first-of-its-kind interactive goal elicitation algorithm, that can use information generated from incorrect model beliefs.
    \item We provide an empirical evaluation demonstrating the computational characteristics of our algorithm.
\end{itemize}

\section{Background}
\label{sec:backg}
We will be focusing on deterministic goal-directed planning problems.
Such problems can be represented using a tuple of the form 
$\mathcal{M} = \langle D, I, G\rangle$ \cite{Geffner2013PlanningIntro}. Under this notation, $D$ corresponds to the domain model of the planning problem, which is further defined by using a tuple, $D = \langle F, A\rangle$, where $F$ is a set of propositional fluents that are used to define the state space of the planning problem and $A$ provides the set of actions that can be executed by the agent. Each state possible under the given planning problem can be uniquely identified by the set of fluents that are true in that state, thus the total number of possible states is equal to $2^{|F|}$. Finally, $I$ corresponds to the start state and $G$ captures the partial goal specification, such that any state $s \supseteq G$ is considered a valid goal state.

Now each action $a \in A$ is further defined by the tuple, $a = \langle \preP{\mathcal{M}}(a), \add{\mathcal{M}}(a), \del{\mathcal{M}}(a)\rangle$, where $\preP{\mathcal{M}}$ are the preconditions that need to be satisfied to execute $a$, while $\add{\mathcal{M}}$ and $\del{\mathcal{M}}$ denote the add and delete effects related to the action. We will use $\T$ to capture the effects of executing an action at a given state
$\T(a, s, D)$ defined as:
\begin{equation*}
= \begin{cases} (s\setminus\del{\mathcal{M}}(a))\cup \add{{\mathcal{M}}}(a), & \textrm{if } \preP{\mathcal{M}}(a) \subseteq s \\
              \emph{undefined} & \textrm{otherwise}
\end{cases}
\end{equation*}
Overloading the notations a little bit, we will also use $\T$ to capture the consequence of executing a sequence of actions $<a_1,a_2,..,a_k>$, i.e., 
\begin{align*}
    \begin{split}
        \T(<a_1,a_2,..,a_k>, s, D) = \\ \T(a_1, \T(<a_1,a_2,..,a_k>, s, D), D).
    \end{split}
\end{align*}

A solution to a planning problem takes the form of a plan, where a plan is a sequence of actions whose execution in the initial state would result in a goal state, i.e., $\pi=\langle a_1,..., a_k\rangle$ is a plan if $\T(\pi, I^{\mathcal{M}}, D^{\mathcal{M}})\supseteq G^{\mathcal{M}}$. We can additionally, associate a cost with each action, however, to keep the formulation simple we will simply assume that each action has a unit cost and $C(\pi) = |\pi|$. We will refer to a plan $\pi$ as being optimal if there exist no other valid plans that cost less than $C(\pi)$.

\section{Related Work}
\label{sec:relw}
The recognition of potential dangers of misspecification of agent objectives has a long history within AI \cite{turing,weiner}, and builds on ideas from even earlier philosophers. However, the modern form of the problem was effectively established by \cite{cirl}, where they formalize the notion of assistive games to help optimize for the human’s unspecified objective. Apart from the formalization, one of the core technical contributions of the paper was the development of an algorithm to help generate more informative traces. However, as we will see such information would be influenced by not only their inability to perform correct introspection (commonly acknowledged in the literature), but also their misunderstandings about the agent itself. Other prominent works in this direction include works on reward design \cite{IRD}, works that  try to query the human about preferred behavior \cite{rewardModel} and other works on generating informative traces \cite{pedagogic}. There are also works that investigated the moral aspects of value alignment \cite{geometric,leike_2022}, however, we will treat the problem of developing moral agents as being orthogonal to the problem of aligning objectives.

None of these works explicitly try to model the role played by the human and agent asymmetries in causing this misalignment in the first place. Human-aware AI  \cite{book}, was a framework that was originally developed in the context of generating explainable behavior. 
The framework hypothesizes that potential asymmetries between the human and the AI agent can cause a mismatch between the decisions chosen by the system and what the human would have expected. Such mismatches would cause the human to be confused as to why the agent may be following a particular action, which in turn would require the agent to explain its current decisions to the user. In general, these works identify three broad classes of asymmetries between the user and the agent \cite{thesis}, namely asymmetry in knowledge about the task, asymmetry in inferential capabilities, and asymmetry in vocabulary. The explanation methods developed under the aegis of human-aware AI (cf. \cite{model-rec,helm,blackbox}) tend to focus on identifying and addressing these asymmetries so that the agent and the user can reconcile their differences in expectations about the right course of action for a given problem. In many ways, the goal of this work is to invert the process. We are trying to identify and leverage asymmetries to reconstruct and then try to meet the original expectations the human had, from the information they provide. In this sense, our work is also closely related to a method called explicable planning \cite{expyu}, where the system tries to generate behavior that matches user expectations. 
However, in explicable planning, the final goal is usually provided and the objective of the planning process is to generate plans that closely match behaviors that the human expected. In our case, we will not try to match the generated behavior with what the human expects, but rather focus only on ensuring that the outcomes we generate satisfy what the user expected (the behavior that generates that outcome may look nothing like what the user expected). 

A parallel thread of work in value alignment that is orthogonal to the efforts outlined in this paper is that of formulating the set of values that the agent needs to be imbued with (cf. \cite{Lera-LeriBSLR22,Serramia1,MontesS22} ). These works build on notions of values as determined in the wider psychological and social sciences literature \cite{ref9,ref10}. Our method is completely compatible with these efforts, as our objective is to ensure how these values, once identified, can be enforced in the agent. Our framework as of right now makes no commitments as to what goals or objectives are specified by the user.

Another closely related set of works is that of model elicitation \cite{ref8old,aineto2019model}, preference elicitation \cite{mantik2022preference,chen2004survey}, resolving reward uncertainty \cite{zhang2017approximately,wilson2012bayesian}, goal refinement \cite{mohajeriparizi2022preference} and the technique of knowledge tracing \cite{corbett1994knowledge} as applied in the context of intelligent tutoring systems. All these works are trying to solve a closely related problem, in that they are trying to acquire some model information from a user or another agent. However, such works are fundamentally incompatible with our setting as none of the works in these areas currently allow the system to leverage information generated by users under potentially incorrect beliefs about the system.

\section{Motivating Example}
\label{sec:motiv}
Consider an intelligent robotic assistant that is being used to help in daily household chores of its users. 
The robot is expected to take task specification, along with any optional guidance from its users and is expected to fulfill the user’s requirements.
Let us assume that in this case, the robot is aware that the goals that the user may specify may be incomplete.
As a specific example, consider a case where the user asks the robot to prepare a cup of tea.
If the robot were to simply opt for the optimal plan, it would have simply reached out to the tea leaves closest to it and made tea with it. Which in this case turns out to be some low-quality tea leaves left at the bottom of the kitchen cupboard.
However, if the robot was to follow this plan, the prepared tea wouldn’t have satisfied the user’s expectations since when asking for a cup of tea the user was actually hoping to get tea made with good quality tea-leaves. The user may have just forgotten to specify the quality or overlooked the possibility that the tea could have been made with poor quality tea-leaves.

Now the robot on its own can’t come up with what the human may have really wanted, and querying them about all other possibilities might be extremely difficult. Thankfully, in this case the human may have or is willing to provide additional instructions about the task. Let’s assume the simplest case where the human provides an entire plan on how to make the tea. 
Let’s assume that the plan provided involves the robot fetching a ladder, putting it next to the cupboard, climbing on the ladder and fetching good quality tea leaves, then making the tea. 
This is not a plan the robot can execute on its own, since unbeknownst to the user, the robot can’t climb ladders. However, assuming this plan, at least in the human model, captures what they really want could give the robot clues about the true human goal. Once this is determined, the robot can independently figure out how to achieve the goal.

Specifically, if it knew the human’s belief about the robot, it could try to simulate the plan in the human model and see what state they expect and try to see what fluents that are true in the goal state may additionally be part of the true human goal. Now in this case, this could involve the fluent regarding the use of high quality tea leaves, but also fluents about the position of the ladder and whether the robot used it. Now one of the central challenges involved with this setting is to come up with a method wherein the robot finds a plan that is guaranteed to satisfy the unspecified human goal while minimizing the number of times the human is queried to get more information.

\section{Goal Alignment Problem}
\label{sec:prob}
Our setting consists of a robot (we use the term robot as a stand-in for any autonomous agent) that is expected to perform a task assigned to it by a human. Now we will start by denoting the domain model used by the robot as  $D^{R}= \langle F, A^{R}\rangle$, and the initial state as captured by the robot as $I^{R}$. Now, keeping with the conventions from human-aware AI, the human who assigns the task may have different beliefs about the robot's model and the current state. Such differences could reflect their potential biases about the robot and their own incorrect and limited understanding of the task. Let us denote the human's beliefs about the robot model as  $\HumanM = \langle D^{H}, I^H, G^H\rangle$, where $D^{H}= \langle F, A^{H}\rangle$ is domain model human ascribes to the robot,  $I^H$ the human belief about the initial state and $G^{H}$ is the goal specified by the human.
The human would have come up with this goal specification while keeping in mind their belief about the robot's capability and the human's own preferences about the expected outcome.
In our earlier example, $G^{H}$ would just include the fact that tea has to be made. 
The assumption that both the human and the robot share fluents is a common assumption made throughout human-aware planning problems (cf. \cite{book}), and we can leverage methods like \cite{blackbox} to easily relax this assumption.
The value alignment problem arises when optimization of the specific robot objective doesn't necessarily maximize the underlying human reward. In our setting, this translates to the possibility that a plan that achieves the specified goal need not achieve the underlying human goal. 
Going back to our example, the goal specification that a tea needs to be made is misaligned because there are plans that are valid to that goal and which do not satisfy other considerations the human could have, like the fact that the tea needs to be made with high-quality tea leaves.
More formally, we will define the goal-misalignment problem as follows

\begin{defn}
A goal specification $G^{H}$ is said to be misaligned with the human goal $G^*$ for a robot domain model $D^{R}$ and initial state $I^{R}$, if there exists an action sequence $\pi = \langle a_1,...,a_k\rangle$  such that $\T(\pi,I^{R},D^{R}) \supseteq G^{H}$, but $\T(\pi,I^{R},D^{R}) \not\supseteq G^{*}$
\end{defn}

Traditionally one of the main sources of information used to address value alignment problems (cf. the setting presented by \shortcite{cirl}), are potential traces provided by humans that satisfy their underlying objectives.
The use of such information generally entails the assumption that, while the human may not be able to correctly specify their objectives, they can still recognize when a state that satisfies their objectives is reached and potentially reason about how to reach such states.
In our case, this information is contained within the human-specified plan $\pi^{H}$, that the human believes the robot can follow to achieve the goal\footnote{Equivalently, we could also consider cases where the human may provide a plan they could execute themselves to achieve the goal. In such case, the remaining problem definition and solution approach remain the same except that we will be using the human model of themselves ($D^{H}$) instead of their model of the robot ($D^{R}$) to analyze the plan.}. In our example, this would correspond to the plan provided by the user involving the use of ladders.

In theory, the simplicity of the setting dissipates almost all of the traditional challenges that are identified by current solutions to the value alignment problem. For one, goals are a much simpler structure to specify objectives than rewards are. The complexity of rewards as a specification mechanism is the primary focus of many approaches like \cite{IRD} and \cite{rewardModel} and there is empirical evidence showing people are bad at specifying effective reward functions. On the other hand, there is psychological evidence that argues that people tend to perform planning in terms of goals and subgoals \cite{simon}. As such, people would have a much easier time specifying goals than rewards. Similarly, for a deterministic task, a single plan is sufficient to reach the goal. Unlike \cite{cirl}, we need not worry about using inverse-reinforcement learning algorithms to identify the more general reward function that may be implied by the trace.

However, the clarity of the setting also affords us the opportunity to see the more foundational problems that are frequently shrouded by the complexity of the setting. First off, even in this rather simple setting, the human's ability to effectively specify objectives depend on their correct understanding of the robot’s capabilities and their ability to correctly anticipate the kind of plans that the robot may come up with in response to this new goal. This could even include cases where the limitations of the inferential capabilities of the human prevent them from correctly anticipating the effects. This inability to correctly model the robot lies at the heart of the value alignment problem, in fact, \cite{kimtalk} presents a human interaction with a modern AI agent to that of interacting with an alien intelligence. 

Now coming back to the plan  $\pi^{H}$, even if we allow for the possibility that in the human mental model that the plan could achieve the true goal, there is no reason to believe that the robot can execute it or even that executing it will result in the same goal state. In our running example, the robot can't execute the specified plan as it will not be able to execute the climb ladder action.
As a starting point, we will assume that what the human really cares about is the final outcome of a plan, and thus effectively only the goal state matters. 
Thus a new possibility may be to try to not follow the specified plan, but rather try to recreate the final state expected by the human.
Here again, we run into a new problem, as the robot may not be exactly able to generate the state that results from executing the plan in the human mental model.
In our running example, let's assume there are fluents corresponding to what tools the robot used. In this case, it will not be able to exactly replicate the final state as it can't climb the ladder and thus can't turn the fluent related to the ladder being used true.
Note that this is completely consistent with cases where the human may have trajectory level constraints, as they can be compiled down into goal state fluents (cf. \cite{BaierBM09}). Now let the unknown goal the human has, be $G^*$ and they only partially specified it to the robot, i.e., $G^{H} \subseteq G^*$. Thus, the central challenge is how does the system determine if it can achieve $G^*$, and if so how does it come up with a plan that satisfies the goal $G^*$. 

However, the fact that the human provided the robot with a plan gives us information about what $G^*$. For one, we can assert that $G^*$ must be a subset of what the human believes would have resulted from executing the plan ($\T(\pi^{H}, I^{H}, D^{H})$). The problem of course is how does one identify the exact subset. The fact that goals are an intuitive structure for humans means that we can directly query the human about them. Unfortunately, queries designed to directly get $G^*$ (say by asking, {\em `are you sure you only need me to achieve $G^{H}$?'}) are bound to fail. This is because the difference between $G^{H}$ and $G^*$, is not just a result of them forgetting some fluents, but a reflection of their beliefs about the task.
For example, in the tea-making task, the human would never remember to specify that the tea needs to be made with water because they would never be able to imagine doing it in any other way.
However, the robot could on the other hand ask the human whether they care about any given fluent (for example, {\em 'would you mind if the tea was not made with water?'}). Thus we will introduce a function $\Oracle: F \rightarrow [0,1]$ that will return 1 if a given fluent is part of $G^*$. 
Note that the central computational challenge we have is to find plans that will achieve the goal while minimizing the queries to humans.
Now with all the components specified, we are ready to formally define the central problem.

\begin{defn}
A \textbf{human-aware goal alignment (HAGL)} is specified by the tuple $\HAGL = \langle D^{R}, I^{R}, G^{H}, D^{H}, I^{H}, \pi^{H}, \Oracle \rangle$, where there exists an unknown goal $G^*$, such that $\T(\pi^{H},I^{H},D^{H}) \supseteq G^*$ and $G^{H} \subseteq G^*$ and  $\forall, f\in F, \Oracle(f) = 1$, if and only if $f \in G^*$. Now the goal of the robot is to find $\pi^{R}$ such that $\T(\pi^{R},I^{R},D^R) \supseteq G^*$, if one exists, while minimizing the queries to $\Oracle$
\end{defn}

As with many of the human-aware planning works, we will assume access to  $D^{H}$ and $I^{H}$.
Note that the solution we propose of finding a plan that results in a superset of  $G^*$ is still consistent with cases where the human may want to avoid some undesirable side effects. This can be achieved by adding new fluents that correspond to negations of existing fluents (similarly the model could be updated to ensure that the original fluent and the new fluent will always carry complementary values in every reachable state). 
Our current formulation can capture cases where a fluent corresponds to an undesirable side-effect, by adding the fluent corresponding to the negation of the undesirable fluent into the goal specification $G^*$.

Now just to see the complexity of the specified problem, we can compare it against planning and see that it is at the very least as hard as solving classical planning problems, i.e., it is at least PSPACE-Hard.

\begin{prop}
A decision-version of HAGL, i.e, the problem of establishing whether there exists a plan for a given a HAGL problem $\HAGL$ that satisfies $G^*$ with just $K$ queries to \Oracle, is at least PSPACE-Hard.
\end{prop}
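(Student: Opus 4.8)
The plan is to give a polynomial-time many-one reduction from the plan-existence problem for classical (propositional STRIPS) planning, which is well known to be PSPACE-complete, to the decision version of HAGL. The underlying observation is that HAGL contains classical planning as a special case: if the human model and the human plan are chosen so that the constraints in the definition of HAGL pin $G^*$ down to exactly the classical goal, then the oracle becomes inert and solving the HAGL instance coincides with solving the original planning instance.

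Concretely, given a classical planning instance $\mathcal{M} = \langle D, I, G\rangle$ with $D = \langle F, A\rangle$, I would build $\HAGL = \langle D^{R}, I^{R}, G^{H}, D^{H}, I^{H}, \pi^{H}, \Oracle\rangle$ as follows: take $D^{R} := D$, $I^{R} := I$, $G^{H} := G$; on the human side set $D^{H} := D$, let $I^{H} := G$ (the state in which exactly the fluents of $G$ are true), let $\pi^{H} := \langle\rangle$ be the empty plan, and define $\Oracle(f) := 1$ iff $f \in G$; finally set the query budget to any fixed $K \geq 0$. This construction is computable in time polynomial in the size of $\mathcal{M}$, and --- crucially --- it does not presuppose a solution of $\mathcal{M}$, because the empty plan already does the job in the human model: $\T(\pi^{H}, I^{H}, D^{H}) = I^{H} = G$.

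Next I would verify that the instance is legal and that it forces $G^* = G$. Taking $G^* := G$, the definition of HAGL requires $\T(\pi^{H}, I^{H}, D^{H}) \supseteq G^*$, $G^{H} \subseteq G^*$, and $\Oracle(f) = 1 \iff f \in G^*$, all of which hold by construction; moreover $G^*$ is uniquely determined, since $G^{H} \subseteq G^* \subseteq \T(\pi^{H}, I^{H}, D^{H})$ together with $G^{H} = G = \T(\pi^{H}, I^{H}, D^{H})$ gives $G^* = G$. Consequently, for any budget $K \geq 0$, a valid HAGL solution is exactly a plan $\pi^{R}$ with $\T(\pi^{R}, I^{R}, D^{R}) \supseteq G$ (the oracle carries no information the robot cannot already derive from the given $D^{H}, I^{H}, \pi^{H}, G^{H}$), and since $I^{R} = I$ and $D^{R} = D$, such a plan exists if and only if $\mathcal{M}$ is solvable. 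This yields both directions of the reduction simultaneously, so the decision version of HAGL is PSPACE-hard.

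The construction is routine; the single point that needs care --- and the only place a careless attempt fails --- is producing $\pi^{H}$ within the reduction. The ``obvious'' choice $D^{H} := D$, $I^{H} := I$ would force $\pi^{H}$ to be an actual solution of $\mathcal{M}$, which we cannot compute in polynomial time. Letting the human's initial state already satisfy $G$ (equivalently, adding to $A^{H}$ a single precondition-free action that asserts every fluent of $G$) removes this obstruction and leaves the embedded classical planning problem fully exposed. It is worth noting in passing that this argument establishes hardness even for $K = 0$ and even when no genuine reasoning about human--robot asymmetry is required, so the PSPACE lower bound is inherited entirely from the planning substrate; the distinctive difficulty of HAGL lies above this floor, in the interplay between the model asymmetry and the query budget.
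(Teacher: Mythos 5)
Your proposal is correct and takes essentially the same approach as the paper: a reduction from classical plan existence in which the robot side carries the original planning instance and the human side is trivialized so that $\pi^{H}$ can be produced in polynomial time. The only (cosmetic) differences are that you trivialize the human plan by setting $I^{H}:=G$ with the empty plan rather than adding a precondition-free action $a^{G}$ with $\pi^{H}=\langle a^{G}\rangle$ as the paper does (a variant you yourself note is equivalent), and that your construction pins $G^{*}=G$ exactly so the hardness already holds for $K=0$, whereas the paper simply sets $K=|F|$.
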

\begin{proof}[Proof Sketch]
We can establish this by showing that a plan existence problem for a model $\mathcal{M} = \langle D, I, G\rangle$ (which is known to be PSPACE-Complete \cite{bylander}) can be compiled into a HAGL problem. Specifically, one where $G^*$ is the same as $G$, the robot domain model and initial state are the same as those that are part of the original planning problem and the human model contains an action $a^{G}$ with an empty precondition that sets the $G$ true. Here the human plan is given as  $\pi^{H} = \langle a^{G}\rangle$ and we can additionally set $K = |F|$. Now the original planning problem is solvable if and only if there exists a plan for the HAGL problem.
\end{proof}

This further highlights our argument that even when one removes many of the traditional complexities associated with value alignment, we still find a complex and challenging computational problem at the heart of the goal-alignment problem. One that could have clear implications on everyday interactions humans could have with AI systems.

One of the big advantages that this formulation has over the traditional ones is the fact that $\T(\pi^{H},I^{H},D^H)$ already gives you an upper bound on possible things the human goal may contain. In fact, if the robot can already achieve a state that is a superset of $\T(\pi^{H},I^{H},D^H)$, then that plan is guaranteed to be a plan that satisfies the true human goal. This is only possible because the robot is maintaining an explicit model of the human’s belief about the robot model. However, this is only one way in which modeling human beliefs can help the robot in finding plans that satisfy the true human goal. As we will see in the next section, we can further leverage the human model to get better estimates on which of these goal fluents the human may have actually intended to achieve (as opposed to mere unintended side-effects).

\section{A Solution for Goal Alignment Problem}
\label{sec:solutn}
In addition to introducing a new version of the value alignment problem, we will also propose a solution for the goal alignment problem as described earlier. In particular,  we will approximate the value of information related to querying each fluent and then iteratively query the ones with the highest value. 
We will only use this procedure if $G^{H}$ is achievable, but the robot can't achieve all the fluents that were made true by the human plan in the human model ($ \T(\pi^{H},I^{H},D^{H})$).
We will calculate the value associated with querying about each fluent, as 
\[\VQ(f) = p(f\in G^*) \times V(f\in G^*) + (1 - p(f\in G^*)) \times V(f\not\in G^*)\]

Where $p(f\in G^*)$ is the probability that fluent is part of the goal and $V(f\in G^*)$, respective values of knowing whether $f$ is part of the goal or not. Now to simplify the calculation of these components, we will make a simplifying assumption that the achievement of each fluent can be done independently of each other. 
Let $S^{H}_{G^*}$ represent the state that results from executing the plan $\pi^{H}$ in the human model (i.e., $S^{H}_{G^*} = \T(\pi^{H},I^{H},D^{H})$) and 
let $\hat{F} \subseteq S^{H}_{G^*}$ be the set of fluents in the goal state that the robot cannot achieve in its true model.
Now to calculate the probability, we will employ a strategy similar to the ones used in goal recognition \cite{RamirezG10}. Namely to detect whether the suboptimality of the plan specified by the human may be explained by a given fluent. That is if the inclusion of a fluent $f$ in the goal set (i.e., $G^{H} \cup \{f\}$), makes the optimal plan for the new goal in the human model closer to the cost of the specified plan, then you will assign a higher probability to that fluent. Keeping with the conventions used by   \cite{RamirezG10}, we can formalize this as

\[ p(f\in G^*) \propto e^{-1\times\beta\times|C(\pi^{H}) - C(\hat{\pi}^*_f)| }\]
Where $\hat{\pi}^*_f$ is a plan that is optimal in the human model for the goal $G^{H} \cup {f}$, where $\beta$ is usually referred to as a rationality parameter and controls the randomness of the decision-maker. 
Note that this approach assumes that the human follows a noisy rational decision-making process, an assumption that has been shown to have psychological validity \cite{noisyrat}.

Now coming to the value, the value function reflects the certainty the robot has regarding the achievability of the goal state. 
If the robot knows for certain that it can be achieved or cannot be achieved then it will be set to 1. 
More formally the value will be equal to the sum of the probability that the $G^*$ is unachievable and the probability there exists a single plan that achieves $G^*$ (these two terms are mutually exclusive). 
Now we can find a lower bound on this true value by just using the probability that the goal is unachievable. 
\[ V(f\in G^{*}) \approxeq \sum_{\bar{G}} P(G^{*} = \bar{G}) \times \mathbbm{1}(\bar{G}~\textrm{not solvable})\]
Where $\bar{G}$ is any subset of $S^{H}_{G^*}$ containing $G^{H}$ that satisfy $f\in G^{*}$ (i.e., $G^{H} \subseteq \bar{G} \subseteq S^{H}_{G^*}$ and $f \in \bar{G}$), $P(G^{*} = \bar{G})$ probability that the true goal is the same as $\bar{G}$ and $\mathbbm{1}(\bar{G}~\textrm{not solvable})$ is an indicator function that evaluates to true if $\bar{G}$ is unsolvable. We can similarly define $V(f \not\in G^{*})$, but now we will only consider subsets of goal state that don't contain $f$.

Exactly calculating this lower bound on true value can still be computationally expensive, as it would require effectively testing the achievability of every subset that satisfies the condition discussed above (and calculating the probability as well).
However, since we are assuming that if a fluent is achievable in isolation in robot model, it can also be achieved as part of any goal state, we only need to care about the fluents that are part of $\hat{F}$
So we will define
\[\tilde{V}(f\in G^*) = \begin{cases} 1 ~\textrm{if }~f~ \textrm{is not achievable}\\ \prod_{\hat{f} \in \hat{F}}   p(\hat{f}\in G^*)~\textrm{Otherwise}\end{cases}\]
In the case of $\tilde{V}(f\not\in G^*)$ the value is always given as $\tilde{V}(f\not\in G^*) = \prod_{\hat{f} \in \hat{F}\setminus \{f\}}   p(\hat{f}\in G^*)$. Now the important point of this approximation is the assumption that each fluent’s independent achievability reflects its overall achievability. However, while many fluents may be achievable in isolation, there may be subsets of fluents containing that fluent which are not achievable. 
However, we can show that the value we calculated is guaranteed to be an approximation of the true value. 

\begin{prop}
For a given HAGL problem for an $f \in S^{H}_{G^*}$, we will have $V(f\in G^*)\geq \tilde{V}(f\in G^*)$ and $V(f\not\in G^*)\geq \tilde{V}(f\not\in G^*)$
\end{prop}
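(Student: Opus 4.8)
The plan is to read $V$ as probability mass placed on candidate true goals, to isolate inside it the mass of the event ``every robot‑unachievable fluent is demanded by $G^*$'', and to observe that this mass is exactly the product defining $\tilde V$; the inequality then follows from monotonicity of probability together with the fact that the full value only adds non‑negative terms on top of that mass. First I would fix notation: write $S=S^{H}_{G^*}=\T(\pi^{H},I^{H},D^{H})$, and for a queried fluent $f$ let $M(f\in G^*)=\sum_{\bar G}P(G^*=\bar G)\,\mathbbm{1}(\bar G\text{ not solvable})$ be the ``unachievability mass'', where $\bar G$ ranges over the sets with $G^{H}\subseteq\bar G\subseteq S$ and $f\in\bar G$ (and $M(f\notin G^*)$ the analogue with $f\notin\bar G$). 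Since the true value of the query is, by its definition, $M$ plus the mutually exclusive ``a single robot plan covers every remaining candidate'' probability, it suffices to prove $M(f\in G^*)\ge\tilde V(f\in G^*)$ and $M(f\notin G^*)\ge\tilde V(f\notin G^*)$. The structural fact I would lean on is monotonicity of achievability: if $\hat f\in\hat F$ then no robot plan reaches any state containing $\hat f$, so every candidate goal with $\bar G\cap\hat F\neq\emptyset$ --- in particular every $\bar G\supseteq\hat F$ --- is unsolvable, i.e. $\mathbbm{1}(\bar G\text{ not solvable})\ge\mathbbm{1}(\hat F\subseteq\bar G)$.

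Next I would combine these. Restricting the defining sum to the terms with $\bar G\supseteq\hat F$ gives $M(f\in G^*)\ge P(\hat F\subseteq G^*\mid f\in G^*)$, and since the elicitation procedure is invoked only when $G^{H}$ is itself achievable we have $\hat F\cap G^{H}=\emptyset$; hence, under the independence assumption on the events $\{\hat f\in G^*\}$, this conditional probability equals $\prod_{\hat f\in\hat F}p(\hat f\in G^*)$ whenever $f\notin\hat F$, which is exactly $\tilde V(f\in G^*)$ in that case. If instead $f\in\hat F$, then $f$ is itself unachievable, every $\bar G$ containing $f$ is unsolvable, and $M(f\in G^*)=1=\tilde V(f\in G^*)$. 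The second inequality is obtained in the same way with $\hat F$ replaced by $\hat F\setminus\{f\}$ and the sum restricted to the $\bar G$ with $f\notin\bar G$: when $\hat F\setminus\{f\}\neq\emptyset$ the identical argument yields $M(f\notin G^*)\ge\prod_{\hat f\in\hat F\setminus\{f\}}p(\hat f\in G^*)=\tilde V(f\notin G^*)$, and the degenerate case $\hat F=\{f\}$ makes $\tilde V(f\notin G^*)$ an empty product equal to $1$, which is still matched because the robot is then certain it can achieve every remaining fluent of $S$ in isolation --- hence jointly, by the standing assumption --- and so owns a single plan covering every remaining candidate, contributing a full unit to the true value $V(f\notin G^*)$ through its second term.

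I expect the only real obstacle to be bookkeeping rather than conceptual: one must verify that conditioning on the query outcome together with the support constraint $G^{H}\subseteq G^*\subseteq S$ leaves the family $\{\hat f\in G^*\}_{\hat f\in\hat F}$ independent with the stated marginals, so that the restricted mass collapses cleanly to the product appearing in $\tilde V$, and one must keep the case split on whether the queried fluent $f$ lies in $\hat F$ straight, since $\tilde V$ is defined piecewise around exactly that distinction.
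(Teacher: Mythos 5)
Your proof is correct and shares the paper's overall skeleton---lower-bound the value by the unachievability mass and case-split on whether the queried fluent lies in $\hat{F}$---but the way you discharge the crucial inequality differs from, and is arguably sounder than, the paper's. The paper's sketch rests on the claim that a \emph{single term} $P(G^*=\bar{G})\cdot\mathbbm{1}(\bar{G}~\textrm{not solvable})$ in the defining sum already dominates $\prod_{\hat{f}\in\hat{F}}p(\hat{f}\in G^*)$ (together with a monotonicity fact stated as $P(G^*=\bar{G})\geq P(\bar{f}\in G^*)$, which as written points the wrong way). Taken literally the single-term claim can fail: when $S^{H}_{G^*}\setminus G^{H}$ is large, every individual $P(G^*=\bar{G})$ can be far smaller than the product over the (possibly small) set $\hat{F}$. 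Your argument instead aggregates the mass of \emph{all} candidates $\bar{G}\supseteq\hat{F}$, each of which is unsolvable because it contains an individually unachievable fluent, and identifies that total mass with $P(\hat{F}\subseteq G^*)$, which collapses to the desired product under the independence assumption; this makes the bound independent of any one candidate carrying enough probability. You also treat the degenerate case $\hat{F}=\{f\}$ for the second inequality explicitly, observing that $\tilde{V}(f\not\in G^*)$ is then an empty product equal to $1$ and that the matching unit of true value must come from the ``single plan achieves $G^*$'' component of $V$ rather than from unachievability mass---a case the paper's sketch passes over with ``similar reasoning,'' and one where reading $V$ as only the displayed lower-bound sum would actually break the claim. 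The one caveat, which you flag yourself, is that both your argument and the paper's implicitly require the membership events $\{\hat{f}\in G^*\}$ to remain independent with marginals $p(\hat{f}\in G^*)$ after conditioning on $G^{H}\subseteq G^*\subseteq S^{H}_{G^*}$ and on the query outcome; that is a modeling assumption rather than something either proof establishes.
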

\begin{proof}[Proof Sketch]
This follows from two facts (a) $P(G^* =\bar{G}) \geq P(\bar{f} \in G^*)$ for any $\bar{f} \bar{G}$, and (b) there may be subsets of $S^{H}_{G^*}$ that are unsolvable, which doesn't contain any elements from $\hat{F}$. This means the sum of elements used to calculate the lower bound $V(f\in G^{*})$ would be greater than or equal to $\tilde{V}(f\in G^{*})$. When $\tilde{V}(f\in G^{*})=1$, then $V(f\in G^{*})$ must equal to one, since all possible goals are unachievable and when $\tilde{V}(f\in G^{*})=\prod_{\hat{f} \in \hat{F}}   p(\hat{f}\in G^*)$, then there must exist at least one term in the sum that is greater than or equal to $\prod_{\hat{f} \in \hat{F}}   p(\hat{f}\in G^*)$. We can use a similar kind of reasoning to show the relation also exists between $V(f\not\in G^*)$ and $\tilde{V}(f\not\in G^*)$.
\end{proof}

Now that we have a value associated with each fluent. We will start by querying them in the order of their value. We will end the query process under one of the three conditions
\begin{enumerate}
    \item The human says yes to a fluent that cannot be achieved
    \item The current subset of fluents the human has said yes to cannot be achieved along with the goal
    \item There exists a plan that can achieve the current subset of fluents the human has said yes to can be achieved along with $G^{H}$ and any unqueried fluent.
\end{enumerate}
The first two conditions correspond to cases where the robot can't achieve the expected goal and the latter where the robot can achieve a superset of $G^*$ and thus that plan would be acceptable to the human. Algorithm \ref{algo:overall} presents the pseudocode for the overall procedure.

\begin{algorithm}[tb]
   \caption{An approximation-based algorithm to find a solution to a HAGL}
   \label{algo:overall}
\begin{algorithmic}
   \STATE {\bfseries Input:} $\HAGL = \langle D^{R}, I^{R}, G^{H}, \pi^{H}, \Oracle \rangle$
   \STATE $S^{H}_{G^*} = \T(\pi^{H},I^{H},D^{H})$
   \IF{$\langle D^{R}, I^{R}, G^{H}\rangle$ not solvable}
   \RETURN No plan exists
   \ENDIF
   \IF{$\langle D^{R}, I^{R}, S^{H}_{G^*}\rangle$ is solvable}
   \RETURN Return a valid plan for $\langle D^{R}, I^{R}, S^{H}_{G^*}\rangle$
   \ENDIF
   \STATE $Q \leftarrow$ A queue of fluents from the set $S^{H}_{G^*}\setminus G^{H}$ ordered by $\VQ$
   \STATE $\mathbb{C} \leftarrow \emptyset$
   \WHILE{$Q$ is not empty}
       \STATE $f \leftarrow Q.pop()$
       \IF{\Oracle(f) == 1}
       \STATE $\mathbb{C} = \mathbb{C} \cup \{f\}$
       \IF{$\langle D^{R}, I^{R}, G^{H} \cup \mathbb{C}\rangle$ not solvable}
            \RETURN No plan exists
        \ENDIF
       \ELSE
          \STATE $\hat{G} = G^{H} \cup \mathbb{C} \cup Q$
          \IF{$\langle D^{R}, I^{R}, \hat{G}\rangle$ is solvable}
            \RETURN Return a valid plan for $\langle D^{R}, I^{R}, \hat{G}\rangle$
        \ENDIF
       \ENDIF
       \ENDWHILE
    \IF{$\langle D^{R}, I^{R}, G^{H} \cup \mathbb{C}\rangle$ not solvable}
        \RETURN No plan exists
    \ELSE
        \RETURN Return a valid plan for $\langle D^{R}, I^{R}, G^{H} \cup \mathbb{C}\rangle$
    \ENDIF
\end{algorithmic}
\end{algorithm}

\begin{prop}
    Algorithm \ref{algo:overall} is a complete procedure for any given HAGL problem, i.e, it will always find a solution if one exists.
\end{prop}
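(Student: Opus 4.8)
The plan is to split the claim into three easy-to-verify facts about Algorithm~\ref{algo:overall} and then combine them. First I would show that the procedure always \emph{terminates}; second, that whenever it outputs ``No plan exists'' it has actually exhibited a subset of $G^*$ that is unsolvable in the robot model $\langle D^{R},I^{R}\rangle$; and third, that whenever it outputs a plan, that plan reaches a state that is a superset of $G^*$. Completeness then follows in one line: if the HAGL instance admits a solution then $\langle D^{R},I^{R},G^*\rangle$ is solvable, hence (by the second fact, since no subset of a solvable goal is unsolvable) the algorithm never says ``No plan exists''; since the control flow guarantees that it always returns something, it must return a plan, which by the third fact is a genuine solution to the HAGL problem. (The same two facts incidentally give soundness as well, using $G^{H}\subseteq G^*$.)

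Termination I expect to be immediate. The queue $Q$ is initialized to the finite set $S^{H}_{G^*}\setminus G^{H}$, of size at most $|F|$; each pass through the while-loop pops exactly one element of $Q$ and adds none, so there are at most $|F|$ passes, and everything else is a bounded number of classical plan-existence tests, each of which is decidable. So the algorithm halts after $O(|F|)$ solvability checks.

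The real content is a loop invariant linking the algorithm's state to the hidden goal $G^*$. I would maintain, at the top of each while-loop iteration, that (a)~$G^{H}\cup\mathbb{C}\subseteq G^*$ --- because $G^{H}\subseteq G^*$ by the HAGL definition and $\mathbb{C}$ only ever absorbs fluents $f$ with $\Oracle(f)=1$, i.e., $f\in G^*$ --- and (b)~$G^*\subseteq G^{H}\cup\mathbb{C}\cup Q$ --- because $G^*\subseteq S^{H}_{G^*}=\T(\pi^{H},I^{H},D^{H})$ by the HAGL definition, and at that point every fluent of $S^{H}_{G^*}\setminus G^{H}$ has either already been confirmed into $\mathbb{C}$, already been rejected (and a rejected fluent is provably not in $G^*$ by exactness of $\Oracle$), or still sits in $Q$. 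With (a)~and~(b) in hand I would walk through each return statement. The pre-loop ``No plan exists'' (test on $\langle D^{R},I^{R},G^{H}\rangle$) and the in-loop ``No plan exists'' after a confirmed fluent (test on $\langle D^{R},I^{R},G^{H}\cup\mathbb{C}\rangle$) each exhibit an unsolvable subset of $G^*$, namely $G^{H}$ or $G^{H}\cup\mathbb{C}$ by~(a). The plan-returning branches return either a plan for $\langle D^{R},I^{R},S^{H}_{G^*}\rangle$ with $S^{H}_{G^*}\supseteq G^*$, or, in the rejected-fluent branch, a plan for $\hat{G}=G^{H}\cup\mathbb{C}\cup Q\supseteq G^*$ by~(b); either way the reached state lies above $G^*$. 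Finally, when the loop exits with $Q$ empty, every fluent of $S^{H}_{G^*}\setminus G^{H}$ --- hence every fluent of $G^*\setminus G^{H}$ --- has been queried, so $\mathbb{C}=G^*\setminus G^{H}$ and $G^{H}\cup\mathbb{C}=G^*$ exactly; the post-loop test therefore reports ``No plan exists'' iff $G^*$ is itself unsolvable and otherwise returns an actual plan for $G^*$.

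The main obstacle, such as it is, is the bookkeeping behind invariant~(b): one must be careful that rejected fluents are genuinely outside $G^*$ (this is exactly the exactness of $\Oracle$) and that the $Q$-empty exit pins $G^{H}\cup\mathbb{C}$ down to $G^*$ rather than merely to a subset of it. The one control-flow point that deserves a second look is the rejected-fluent branch in which $\hat{G}$ turns out to be \emph{unsolvable}: there the algorithm keeps looping rather than declaring failure, which is correct precisely because $\hat{G}$ may be a strict superset of $G^*$ (it still carries the as-yet-unqueried fluents of $Q$), so its unsolvability says nothing about $G^*$. Beyond those points, the remaining case analysis is routine.
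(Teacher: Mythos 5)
Your proof is correct, and it is considerably more thorough than the paper's own justification, which consists of a single sentence: completeness follows because ``in the worst case, it would ask about every fluent that is part of $S^{H}_{G^*}$ and will be able to determine if a plan exists or not.'' That one-liner only covers your $Q$-empty exit --- the observation that exhaustive querying pins $G^{H}\cup\mathbb{C}$ down to exactly $G^*$, after which a single solvability check decides the instance. What the paper leaves entirely implicit, and what your invariant~(a) supplies, is the fact that none of the \emph{early} ``No plan exists'' returns can be a false negative: each one exhibits a set that is provably a subset of $G^*$ (via $G^{H}\subseteq G^*$ and the exactness of $\Oracle$ on confirmed fluents), so its unsolvability genuinely entails the unsolvability of $G^*$. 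Without that step the worst-case argument alone does not give completeness, since the algorithm could in principle bail out before reaching the exhaustive case. Your invariant~(b) likewise makes precise why the early plan-returning exits are sound and why the rejected-fluent branch must keep looping when $\hat{G}$ is unsolvable --- a control-flow subtlety the paper does not mention. In short, you and the paper share the same high-level idea, but your invariant-based decomposition actually closes the gaps that the paper's sketch glosses over; nothing in your argument is wrong or missing.
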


This result follows from the fact that in the worst case, it would ask about every fluent that is part of $S^{H}_{G^*}$ and will be able to determine if a plan exists or not.

In the case of the running example, the $\hat{F}$ only consists of the fluent corresponding to the use of the ladder. The fluents corresponding to the use of the ladder and the use of the high-quality tea leaves will be assigned the highest probability. In this case, the proposed algorithm generates a plan that achieves the remaining goal fluents once the human is queried about whether the ladder used is part of the goal. Averaged across ten runs, we found that for the running example, our algorithm will query 4.2 times (with the maximum number of queries being 8).  
\section{Empirical Evaluation}
\label{sec:eval}
For evaluating our proposed algorithm, we ran our method on a set of problems selected from standard IPC benchmark problems \cite{ipc}.
Our primary motivation was to test the effectiveness of our method in reducing the number of times the user would need to be queried before the true goal is found. Since we are unaware of any existing methods we can directly apply in this setting, we will compare the number of queries generated against a simple baseline that would query the user about all potential goal predicates. Specifically, the hypothesis we will test will be
\begin{hyp}
The average number of queries generated by our algorithm will be lower than the naive upper bound on the number of queries, which is equal to $|S^{H}_{G^*} \setminus G^{H}|$.
\label{hyp1}
\end{hyp}
In particular, we considered five domains, namely, Blocksworld, Driverlog, Elevators, Rover and Logistics. For each domain, we selected five instances that were used in previous competitions. The true goal in this case consisted of the goal that was specified as part of the original problem, while we created the goal specification provided to the robot by randomly deleting a predicate from the goal specification. The human model was formed by randomly deleting preconditions and deletes from the original domain description and we used the original domain description as the robot model. All plans were generated using FastDownward planner \cite{FD} and we used A-star search with LM-cut heuristic \cite{LMCut} and set $\beta$ to one for probability calculation. All experiments were run on a linux machine with 32GB ram and 16 Intel(R) Xeon(R) 2.60GHz CPUs. We ran our algorithm on each problem instance ten times and the results from our evaluation are provided in Table \ref{tab1}.
The second column in Table 1, provides the baseline upper bound on the number of queries and the second and third columns list the average number of queries generated and the average time taken by our algorithm (along with their standard deviations).

The most striking result to note is the fact that, apart from the blocksworld domain, we see a significant drop in the number of queries in almost all the domains. In fact, for many of the problems the algorithm doesn’t even need to generate a single query to identify a plan that is guaranteed to satisfy the user’s hidden goal. 
This means that for these problems our method was able to find a plan which could achieve a superset of the goal state expected by the user.
The cases where the gains are less marked, particularly in Blocksworld, seem to correspond to ones where the number of fluents in the goal states are small. This indicates that our method will be most effective in problems with a larger fluent set and by extension a larger state space. This is a particularly useful property, as a naive querying strategy will not be viable in such problem settings. It is also worth noting that the time taken to complete the whole interaction is short and within an acceptable bounds for real-time interaction with users.

\begin{table}[!t]
\small
\centering
  \begin{tabular}{|r|r|c|c|c|c|r|r}
    \toprule
    \multicolumn{2}{|c|}{Problem Instance} &
     $|S^{H}_{G^*} \setminus G^{H}|$ &   \multicolumn{2}{|c|}{No of Queries}
    & \multicolumn{2}{c|} {Time (secs)}\\
    \multicolumn{2}{|c|}{} &
     &  Mean & Std
    & Mean & std\\
          \midrule

    \multirow{5}{*}{Blocks} &p1& 7 & 6.4&  1.1 & 5.08 & 0.37\\ 
&p2&3 & 2.6 & 0.52& 2.72 & 0.2 \\
&p3&7 & 5.9 & 1.1 & 4.9 & 0.37\\
&p4& 4& 3.8& 0&3.37 & 0.1\\
&p5&8 & 7.3 & 1.1& 5.6 & 0.24\\
    \bottomrule
 \multirow{5}{*}{Driverlog} 
&p1& 21 &0 &0&0.81 & 0.03\\
&p2 &24 &0 & 0&1 & 0.02\\
&p3&26 &0& 0& 0.83 & 0.01\\
&p4& 23 & 0 & 0& 0.9 & 0.01\\
&p5&23 &14.1 & 4.8 & 20.32 & 1.17 \\
    \bottomrule
 \multirow{5}{*}{Elevator} &p1&25 &0 &0& 0.71 &0.02\\ 
&p2&24 &0&0&0.73&0.04\\
&p3&25 &14&4.16&13.30&1.04\\
&p4&25 &0&0&0.70&0.03\\
&p5&24 &6.7&4.35&11.07 & 1.05\\
    \bottomrule
 \multirow{5}{*}{Logistics} &p1& 12 & 10.8 & 1.4& 8.7& 0.55\\ 
&p2&13& 0 & 0&0.78 & 0.03\\
&p3& 13 &0 & 0&0.78 & 0.03\\
&p4& 12& 9.8 & 2.2& 8.63 & 0.48 \\
&p5& 12 & 10.3 & 1.34 & 8.5 & 0.33\\
    \bottomrule
 \multirow{5}{*}{Rover} &p1&46 & 0 &0& 1.1 & 0.08\\ 
&p2&42 &0&0 &1.07 & 0.05\\
&p3&55 &0& 0&1.13 & 0.05\\
&p4&55 & 29.3 & 11.88 & 34.72 & 3.4\\
&p5& 69&0&0&4.74 & 0.07\\
    \bottomrule
  \end{tabular}
\caption{
Empirical evaluation of the proposed algorithm on a number of standard IPC domains)}
\label{tab1}
\end{table}

\section{Conclusion and Discussion}
\label{sec:disc}
In this paper, we present a reformulation of the value alignment problem, which explicitly accounts for an often overlooked aspect of the problem, namely the asymmetry between the human’s belief and the agent’s true model. Even in this setting, we see that value alignment, or more accurately the goal alignment problem remains a challenging one. We also see how we could leverage the human mental models to possibly generate better ways to query the human to find more information about their underlying objectives. Our initial empirical evaluation shows that even this approximate algorithm helps reduce the number of queries we would need to ask the human before the system can come up with a plan that is guaranteed to satisfy the true human goal. There are multiple ways this work could be extended. One possibility would be to extend the work to support more complex decision-making settings including decision-theoretic ones. Another one would be to look at the use of more realistic decision-making models for humans and also relax assumptions about access to the human mental model of the robot. While the value alignment problem is generally discussed in the context of AI safety, such misspecification and misalignment could affect every possible interaction between a human and AI agent. As such, we hope more researchers working in the area of human-AI interaction would try to account for such misalignment problems when designing their systems. 


\bibliographystyle{named}
\bibliography{ijcai23}

\begin{thebibliography}{}

\bibitem[\protect\citeauthoryear{Aineto \bgroup \em et al.\egroup
  }{2019}]{aineto2019model}
Diego Aineto, Sergio Jim{\'e}nez, Eva Onaindia, and Miquel Ram{\'\i}rez.
\newblock Model recognition as planning.
\newblock In {\em Proceedings of the International Conference on Automated
  Planning and Scheduling}, volume~29, pages 13--21, 2019.

\bibitem[\protect\citeauthoryear{Baier \bgroup \em et al.\egroup
  }{2009}]{BaierBM09}
Jorge~A. Baier, Fahiem Bacchus, and Sheila~A. McIlraith.
\newblock A heuristic search approach to planning with temporally extended
  preferences.
\newblock {\em Artif. Intell.}, 173(5-6):593--618, 2009.

\bibitem[\protect\citeauthoryear{Bylander}{1994}]{bylander}
Tom Bylander.
\newblock The computational complexity of propositional {STRIPS} planning.
\newblock {\em Artif. Intell.}, 69(1-2):165--204, 1994.

\bibitem[\protect\citeauthoryear{Chen and Pu}{2004}]{chen2004survey}
Li~Chen and Pearl Pu.
\newblock Survey of preference elicitation methods.
\newblock Technical report, Ecole Politechnique Federale de Lausanne (EPFL),
  2004.

\bibitem[\protect\citeauthoryear{Christian}{2020}]{book1}
Brian Christian.
\newblock {\em The alignment problem: Machine learning and human values}.
\newblock WW Norton \& Company, 2020.

\bibitem[\protect\citeauthoryear{Corbett and
  Anderson}{1994}]{corbett1994knowledge}
Albert~T Corbett and John~R Anderson.
\newblock Knowledge tracing: Modeling the acquisition of procedural knowledge.
\newblock {\em User modeling and user-adapted interaction}, 4(4):253--278,
  1994.

\bibitem[\protect\citeauthoryear{Fisac \bgroup \em et al.\egroup
  }{2017}]{pedagogic}
Jaime~F. Fisac, Monica~A. Gates, Jessica~B. Hamrick, Chang Liu, Dylan
  Hadfield{-}Menell, Malayandi Palaniappan, Dhruv Malik, S.~Shankar Sastry,
  Thomas~L. Griffiths, and Anca~D. Dragan.
\newblock Pragmatic-pedagogic value alignment.
\newblock In {\em Robotics Research, The 18th International Symposium, {ISRR}
  2017, Puerto Varas, Chile, December 11-14, 2017}, volume~10 of {\em Springer
  Proceedings in Advanced Robotics}, pages 49--57, Puerto Varas, Chile, 2017.
  Springer.

\bibitem[\protect\citeauthoryear{Geffner and
  Bonet}{2013}]{Geffner2013PlanningIntro}
Hector Geffner and Blai Bonet.
\newblock {\em A concise introduction to models and methods for automated
  planning}, volume~7 of {\em Synthesis Lectures on Artificial Intelligence and
  Machine Learning}.
\newblock Morgan \& Claypool Publishers, Kentfield, CA, USA, 2013.

\bibitem[\protect\citeauthoryear{Gouldner}{1975}]{ref10}
Helen Gouldner.
\newblock {THE NATURE OF HUMAN VALUES. By Milton Rokeach. New York: Free Press,
  1973. 438 pp.}
\newblock {\em Social Forces}, 53(4):659--660, 06 1975.

\bibitem[\protect\citeauthoryear{Grover \bgroup \em et al.\egroup
  }{2020}]{ref8old}
Sachin Grover, David Smith, and Subbarao Kambhampati.
\newblock Model elicitation through direct questioning, 11 2020.

\bibitem[\protect\citeauthoryear{Hadfield{-}Menell \bgroup \em et al.\egroup
  }{2016}]{cirl}
Dylan Hadfield{-}Menell, Stuart Russell, Pieter Abbeel, and Anca~D. Dragan.
\newblock Cooperative inverse reinforcement learning.
\newblock In {\em Advances in Neural Information Processing Systems 29: Annual
  Conference on Neural Information Processing Systems 2016, December 5-10,
  2016, Barcelona, Spain}, pages 3909--3917, Barcelona, Spain, 2016. Curran
  Associates, Inc.

\bibitem[\protect\citeauthoryear{Hadfield{-}Menell \bgroup \em et al.\egroup
  }{2017}]{IRD}
Dylan Hadfield{-}Menell, Smitha Milli, Pieter Abbeel, Stuart~J. Russell, and
  Anca~D. Dragan.
\newblock Inverse reward design.
\newblock In {\em Advances in Neural Information Processing Systems 30: Annual
  Conference on Neural Information Processing Systems 2017, December 4-9, 2017,
  Long Beach, CA, {USA}}, pages 6765--6774, Long Beach, CA, {USA}, 2017. Curran
  Associates, Inc.

\bibitem[\protect\citeauthoryear{Helmert and Domshlak}{2009}]{LMCut}
Malte Helmert and Carmel Domshlak.
\newblock Landmarks, critical paths and abstractions: What's the difference
  anyway?
\newblock In {\em Proceedings of the 19th International Conference on Automated
  Planning and Scheduling, {ICAPS} 2009, Thessaloniki, Greece, September 19-23,
  2009}, pages 162--169, Thessaloniki, Greece, 2009. {AAAI}.

\bibitem[\protect\citeauthoryear{Helmert}{2006}]{FD}
Malte Helmert.
\newblock The fast downward planning system.
\newblock {\em J. Artif. Intell. Res.}, 26:191--246, 2006.

\bibitem[\protect\citeauthoryear{{International Planning
  Competition}}{2011}]{ipc}
{International Planning Competition}.
\newblock {IPC Competition Domains}.
\newblock \url{https://goo.gl/i35bxc}, 2011.

\bibitem[\protect\citeauthoryear{Jeon \bgroup \em et al.\egroup
  }{2020}]{noisyrat}
Hong~Jun Jeon, Smitha Milli, and Anca~D. Dragan.
\newblock Reward-rational (implicit) choice: {A} unifying formalism for reward
  learning.
\newblock In {\em Advances in Neural Information Processing Systems 33: Annual
  Conference on Neural Information Processing Systems 2020, NeurIPS 2020,
  December 6-12, 2020, virtual}, pages 4415--4426, Virtual, 2020. Curran
  Associates, Inc.

\bibitem[\protect\citeauthoryear{Kim}{2022}]{kimtalk}
Been Kim.
\newblock Beyond interpretability: developing a language to shape our
  relationships with ai, 2022.
\newblock ICLR Invited Talk.

\bibitem[\protect\citeauthoryear{Leike \bgroup \em et al.\egroup
  }{2018}]{rewardModel}
Jan Leike, David Krueger, Tom Everitt, Miljan Martic, Vishal Maini, and Shane
  Legg.
\newblock Scalable agent alignment via reward modeling: a research direction,
  2018.

\bibitem[\protect\citeauthoryear{Leike}{2022}]{leike_2022}
Jan Leike.
\newblock Our approach to alignment research, Aug 2022.

\bibitem[\protect\citeauthoryear{Lera{-}Leri \bgroup \em et al.\egroup
  }{2022}]{Lera-LeriBSLR22}
Roger Lera{-}Leri, Filippo Bistaffa, Marc Serramia, Maite
  L{\'{o}}pez{-}S{\'{a}}nchez, and Juan~A. Rodr{\'{\i}}guez{-}Aguilar.
\newblock Towards pluralistic value alignment: Aggregating value systems
  through \emph{l}\({}_{\mbox{\emph{p}}}\)-regression.
\newblock In {\em 21st International Conference on Autonomous Agents and
  Multiagent Systems, {AAMAS} 2022, Auckland, New Zealand, May 9-13, 2022},
  pages 780--788. International Foundation for Autonomous Agents and Multiagent
  Systems {(IFAAMAS)}, 2022.

\bibitem[\protect\citeauthoryear{Mantik \bgroup \em et al.\egroup
  }{2022}]{mantik2022preference}
Sheryl Mantik, Minyi Li, and Julie Porteous.
\newblock A preference elicitation framework for automated planning.
\newblock {\em Expert Systems with Applications}, 208:118014, 2022.

\bibitem[\protect\citeauthoryear{Mohajeriparizi \bgroup \em et al.\egroup
  }{2022}]{mohajeriparizi2022preference}
Mostafa Mohajeriparizi, Giovanni Sileno, and Tom van Engers.
\newblock Preference-based goal refinement in bdi agents.
\newblock In {\em Proceedings of the 21st International Conference on
  Autonomous Agents and Multiagent Systems}, pages 917--925, 2022.

\bibitem[\protect\citeauthoryear{Montes and Sierra}{2022}]{MontesS22}
Nieves Montes and Carles Sierra.
\newblock Synthesis and properties of optimally value-aligned normative
  systems.
\newblock {\em J. Artif. Intell. Res.}, 74:1739--1774, 2022.

\bibitem[\protect\citeauthoryear{Peterson}{2019}]{geometric}
Martin Peterson.
\newblock The value alignment problem: a geometric approach.
\newblock {\em Ethics Inf. Technol.}, 21(1):19--28, 2019.

\bibitem[\protect\citeauthoryear{Premack and Woodruff}{1978}]{tom}
David Premack and Guy Woodruff.
\newblock Does the chimpanzee have a theory of mind?
\newblock {\em Behavioral and brain sciences}, 1(4):515--526, 1978.

\bibitem[\protect\citeauthoryear{Ram{\'{\i}}rez and Geffner}{2010}]{RamirezG10}
Miquel Ram{\'{\i}}rez and Hector Geffner.
\newblock Probabilistic plan recognition using off-the-shelf classical
  planners.
\newblock In Maria Fox and David Poole, editors, {\em Proceedings of the
  Twenty-Fourth {AAAI} Conference on Artificial Intelligence, {AAAI} 2010,
  Atlanta, Georgia, USA, July 11-15, 2010}, Atlanta, Georgia, USA, 2010. {AAAI}
  Press.

\bibitem[\protect\citeauthoryear{Russell}{2019}]{book2}
Stuart Russell.
\newblock {\em Human compatible: Artificial intelligence and the problem of
  control}.
\newblock Penguin, 2019.

\bibitem[\protect\citeauthoryear{Schwartz}{2012}]{ref9}
Shalom~H Schwartz.
\newblock An overview of the schwartz theory of basic values.
\newblock {\em Online Readings Psychol. Cult.}, 2(1), December 2012.

\bibitem[\protect\citeauthoryear{Serramia \bgroup \em et al.\egroup
  }{2021}]{Serramia1}
Marc Serramia, Maite L{\'o}pez-S{\'a}nchez, Stefano Moretti, and Juan~Antonio
  Rodr{\'i}guez-Aguilar.
\newblock On the dominant set selection problem and its application to value
  alignment.
\newblock {\em Autonomous Agents and Multi-Agent Systems}, 35(2):42, Jul 2021.

\bibitem[\protect\citeauthoryear{Simon}{1977}]{simon}
Herbert~A Simon.
\newblock The logic of heuristic decision making.
\newblock In {\em Models of discovery}, pages 154--175. Springer, New York,
  1977.

\bibitem[\protect\citeauthoryear{Sreedharan \bgroup \em et al.\egroup
  }{2021a}]{model-rec}
Sarath Sreedharan, Tathagata Chakraborti, and Subbarao Kambhampati.
\newblock Foundations of explanations as model reconciliation.
\newblock {\em Artif. Intell.}, 301:103558, 2021.

\bibitem[\protect\citeauthoryear{Sreedharan \bgroup \em et al.\egroup
  }{2021b}]{helm}
Sarath Sreedharan, Siddharth Srivastava, and Subbarao Kambhampati.
\newblock Using state abstractions to compute personalized contrastive
  explanations for {AI} agent behavior.
\newblock {\em Artif. Intell.}, 301:103570, 2021.

\bibitem[\protect\citeauthoryear{Sreedharan \bgroup \em et al.\egroup
  }{2022a}]{book}
Sarath Sreedharan, Anagha Kulkarni, and Subbarao Kambhampati.
\newblock Explainable human--ai interaction: A planning perspective.
\newblock {\em Synthesis Lectures on Artificial Intelligence and Machine
  Learning}, 16(1):1--184, 2022.

\bibitem[\protect\citeauthoryear{Sreedharan \bgroup \em et al.\egroup
  }{2022b}]{blackbox}
Sarath Sreedharan, Utkarsh Soni, Mudit Verma, Siddharth Srivastava, and
  Subbarao Kambhampati.
\newblock Bridging the gap: Providing post-hoc symbolic explanations for
  sequential decision-making problems with inscrutable representations.
\newblock In {\em The Tenth International Conference on Learning
  Representations, {ICLR} 2022, Virtual Event, April 25-29, 2022}, pages
  o--1v9hdSult, Virtual, 2022. OpenReview.net.

\bibitem[\protect\citeauthoryear{Sreedharan}{2022}]{thesis}
Sarath Sreedharan.
\newblock {\em Foundations of Human-Aware Explanations for Sequential
  Decision-Making Problems}.
\newblock PhD thesis, Arizona State University, 2022.

\bibitem[\protect\citeauthoryear{Turing}{1996}]{turing}
Alan~M Turing.
\newblock Intelligent machinery, a heretical theory.
\newblock {\em Philosophia Mathematica}, 4(3):256--260, 1996.

\bibitem[\protect\citeauthoryear{Wiener}{1960}]{weiner}
Norbert Wiener.
\newblock Some moral and technical consequences of automation: As machines
  learn they may develop unforeseen strategies at rates that baffle their
  programmers.
\newblock {\em Science}, 131(3410):1355--1358, 1960.

\bibitem[\protect\citeauthoryear{Wilson \bgroup \em et al.\egroup
  }{2012}]{wilson2012bayesian}
Aaron Wilson, Alan Fern, and Prasad Tadepalli.
\newblock A bayesian approach for policy learning from trajectory preference
  queries.
\newblock {\em Advances in neural information processing systems}, 25, 2012.

\bibitem[\protect\citeauthoryear{Zhang \bgroup \em et al.\egroup
  }{2017a}]{zhang2017approximately}
Shun Zhang, Edmund Durfee, and Satinder Singh.
\newblock Approximately-optimal queries for planning in reward-uncertain markov
  decision processes.
\newblock In {\em Twenty-Seventh International Conference on Automated Planning
  and Scheduling}, 2017.

\bibitem[\protect\citeauthoryear{Zhang \bgroup \em et al.\egroup
  }{2017b}]{expyu}
Yu~Zhang, Sarath Sreedharan, Anagha Kulkarni, Tathagata Chakraborti,
  Hankz~Hankui Zhuo, and Subbarao Kambhampati.
\newblock Plan explicability and predictability for robot task planning.
\newblock In {\em 2017 {IEEE} International Conference on Robotics and
  Automation, {ICRA} 2017, Singapore, Singapore, May 29 - June 3, 2017}, pages
  1313--1320, Singapore, 2017. {IEEE}.

\end{thebibliography}

\end{document}